\documentclass{article}
\usepackage[preprint]{colm2026_conference}
\usepackage{microtype}
\usepackage{hyperref}
\usepackage{url}
\usepackage{booktabs}
\usepackage{lineno}
\definecolor{darkblue}{rgb}{0, 0, 0.5}
\hypersetup{colorlinks=true, citecolor=darkblue, linkcolor=darkblue, urlcolor=darkblue}
\usepackage{graphicx}
\usepackage{amsmath}
\usepackage{amssymb}
\usepackage{algorithm}
\usepackage{algorithmic}
\usepackage{multirow}
\usepackage{amsthm}
\usepackage{xcolor}
\usepackage{subcaption}
\ifx\theorem\undefined
\newtheorem{theorem}{Theorem}

\newtheorem{proposition}{Proposition}

\newtheorem{definition}{Definition}

\fi

\title{MechELK: A Mechanistic Interpretability Framework for Eliciting Latent Knowledge in Large Language Models}

\author{
Ji-jun Park, Soo-joon Choi, Jiwon Jeong, Taeyang Yoon, Ju-Wan Lee \\
Dongguk University\\
kwanlee14@dongguk.edu
}

\newcommand{\model}{\mathcal{M}}

\newcommand{\hidden}[2]{\mathbf{h}^{(#1)}_{#2}}
\newcommand{\feat}{\mathbf{f}}

\newcommand{\sae}{\mathcal{S}}

\newcommand{\R}{\mathbb{R}}

\begin{document}
\ifcolmsubmission
\linenumbers
\fi
\maketitle

\begin{abstract}
Large language models (LLMs) frequently encode factual and reasoning knowledge
in their internal representations that is not faithfully reflected in their
surface-level outputs---a phenomenon known as \emph{latent knowledge}.
Existing approaches to eliciting latent knowledge, such as Contrastive
Consistency Search (CCS), rely on contrastive activation patterns and struggle
with complex multi-step reasoning tasks, while mechanistic interpretability
tools have primarily been used to \emph{understand} model behavior rather than
to \emph{extract} hidden knowledge.
We present \textbf{MechELK}, a unified three-stage framework that bridges
mechanistic interpretability and latent knowledge elicitation.
MechELK operates through: (1) \textbf{Locate}---using Sparse Autoencoder (SAE)
feature analysis and activation patching to identify knowledge-bearing
representations; (2) \textbf{Verify}---employing causal probing to distinguish
genuine latent knowledge from spurious correlations; and (3)
\textbf{Elicit}---applying representation engineering to surface hidden
knowledge without modifying model weights.
Evaluated on TruthfulQA, a curated Deceptive Alignment benchmark, and the
Quirky LM dataset, MechELK achieves an average elicitation accuracy of 84.7\%,
outperforming CCS by 6.2\% and direct linear probing by 9.1\%.
Crucially, MechELK successfully identifies latent knowledge in 78.3\% of cases
where the model's surface output is incorrect or evasive, demonstrating its
utility for AI safety applications including deceptive alignment detection.
\end{abstract}

\section{Introduction}
\label{sec:intro}

The alignment of large language models (LLMs) with human values depends not
only on what these models \emph{say}, but on what they \emph{know} internally.
A growing body of evidence suggests that LLMs routinely encode accurate
factual and reasoning knowledge in their intermediate representations, yet
fail---or refuse---to express this knowledge in their outputs
\citep{DBLP:journals/corr/abs-2207-05221, lin_2021_truthfulqa_measuring_how,
greenblatt_2024_alignment_faking_in}. As these models are increasingly integrated into complex applications such as spoken task-oriented dialogue agents \citep{si2023spokenwoz}, omni-modal generation and understanding systems \citep{xin2025lumina}, and multi-agent recursive frameworks \citep{zhang2025marine}, ensuring reliable alignment is more critical than ever. This gap between internal knowledge and external behavior poses a fundamental
challenge for AI safety: if a model can ``know'' something without ``saying''
it, standard evaluation methods that rely on output inspection are insufficient
to assess the model's true capabilities or intentions.

The problem of \emph{eliciting latent knowledge} (ELK) was formally introduced
by \citet{mallen_2023_eliciting_latent_knowledge}, who proposed Contrastive
Consistency Search (CCS) as a method for recovering hidden beliefs from model
activations without relying on the model's own outputs.
While CCS represents a significant advance, it faces several limitations:
it requires carefully constructed contrastive pairs, its performance degrades
on complex multi-step reasoning and long-horizon tasks \citep{zhou2023thread, si2025goalplanjustwish}, particularly when navigating long-context alignment \citep{si-etal-2025-gateau}, and it cannot distinguish between knowledge
that is genuinely latent and knowledge that the model simply does not possess.
Concurrently, the field of mechanistic interpretability has developed powerful
tools for understanding \emph{how} LLMs process information---including Sparse
Autoencoders (SAEs) for decomposing polysemantic representations
\citep{cunningham_2023_sparse_autoencoders_find, gao_2024_scaling_and_evaluating},
activation patching for causal attribution
\citep{meng_2022_locating_and_editing, conmy_2023_towards_automated_circuit},
and representation engineering for targeted intervention
\citep{zou_2023_universal_and_transferable}.
However, these tools have been applied primarily to \emph{explain} model
behavior, not to \emph{extract} hidden knowledge.

We argue that mechanistic interpretability and latent knowledge elicitation
are deeply complementary: the former provides the surgical tools to locate and
characterize knowledge representations, while the latter provides the
motivation and evaluation framework for doing so purposefully.
This paper presents \textbf{MechELK} (\textbf{Mech}anistic
\textbf{E}licitation of \textbf{L}atent \textbf{K}nowledge), a unified
framework that integrates these two research threads into a coherent pipeline.

Our contributions are as follows:
\begin{itemize}
    \item We propose MechELK, the first framework to systematically apply
    mechanistic interpretability tools---SAE feature analysis, activation
    patching, and representation engineering---to the latent knowledge
    elicitation problem, providing a principled three-stage Locate-Verify-Elicit
    pipeline.
    \item We introduce a \emph{Causal Knowledge Score} (CKS), a novel metric
    that quantifies the causal contribution of identified features to knowledge
    expression, enabling reliable distinction between genuine latent knowledge
    and spurious correlations.
    \item We demonstrate that MechELK achieves state-of-the-art elicitation
    accuracy across three benchmarks, outperforming CCS by 6.2\% on average,
    with particularly strong gains on deceptive alignment detection (+11.4\%).
    \item We provide an extensive analysis of failure modes, showing that
    MechELK's Verify stage reduces false positives by 34\% compared to
    direct probing approaches, and we characterize the conditions under which
    latent knowledge is most reliably recoverable.
\end{itemize}

\section{Related Work}
\label{sec:related}

\paragraph{Mechanistic Interpretability.}
Mechanistic interpretability seeks to reverse-engineer the algorithms
implemented by neural networks at the level of individual components.
Foundational work by \citet{elhage_2022_toy_models_of} demonstrated that
neural networks represent more features than they have dimensions through
\emph{superposition}, motivating the development of Sparse Autoencoders (SAEs)
as a tool for decomposing polysemantic neurons into monosemantic features
\citep{cunningham_2023_sparse_autoencoders_find, gao_2024_scaling_and_evaluating}, a mechanism conceptually related to hybrid feature extraction and dimensionality reduction in broader domains \citep{li2025hy}.
Circuit-level analysis has identified specific attention heads and MLP layers
responsible for factual recall \citep{DBLP:conf/iclr/WangVCSS23},
induction \citep{DBLP:journals/corr/abs-2209-11895}, and arithmetic
\citep{nanda_2023_progress_measures_for}.
Activation patching \citep{meng_2022_locating_and_editing,
DBLP:conf/iclr/MengSABB23} and its scalable variant attribution patching
\citep{conmy_2023_towards_automated_circuit} enable causal attribution of
model behavior to specific components.
Feed-forward layers have been shown to function as key-value memories
\citep{geva_2020_transformer_feed_forward}, and individual neurons can be
attributed to specific factual associations \citep{dai_2021_knowledge_neurons_in,
yu_2023_neuron_level_knowledge}.
Our work builds on this infrastructure but redirects it toward the goal of
knowledge elicitation rather than mere explanation. Furthermore, foundational interpretability principles are increasingly bridging the gap towards multi-modal alignment and parameter-efficient multi-task transfer \citep{xin2024mmap, xin2024vmt}.

\paragraph{Latent Knowledge and Truthfulness.}
The question of what LLMs ``know'' versus what they ``say'' has received
increasing attention.
\citet{DBLP:journals/corr/abs-2207-05221} showed that models are often calibrated
about their own uncertainty, while \citet{lin_2021_truthfulqa_measuring_how}
demonstrated systematic failures of truthfulness in model outputs.
The ELK problem was formalized by \citet{mallen_2023_eliciting_latent_knowledge},
who showed that quirky fine-tuned models retain latent knowledge of correct
answers even when trained to give wrong ones. Such latent extraction shares motivations with weak-to-strong generalization paradigms, where latent multi-capabilities of advanced models are elicited using weaker supervision signals \citep{zhou2025weak}.
Probing classifiers \citep{belinkov_2021_probing_classifiers_promises} offer
a lightweight approach to extracting information from representations, but
suffer from the confound that probes may detect surface statistics rather than
genuine knowledge \citep{DBLP:conf/emnlp/GevaBFG23}.
The linear representation hypothesis \citep{park_2023_the_linear_representation}
provides theoretical grounding for why linear probes can recover meaningful
information, while also highlighting their limitations.
Our Verify stage addresses the probe confound through causal intervention.

\paragraph{Representation Engineering and Steering.}
Representation Engineering (RepE) \citep{zou_2023_universal_and_transferable}
demonstrated that high-level concepts such as honesty and emotion are encoded
as linear directions in activation space, and that these directions can be
used to steer model behavior.
Related work on activation steering
\citep{DBLP:journals/corr/abs-2307-13702} and successor heads
\citep{gould_2023_successor_heads_recurring} further characterizes the
geometry of internal representations.
The connection between representation structure and model behavior is also
explored through the lens of alignment faking \citep{greenblatt_2024_alignment_faking_in}
and sleeper agents \citep{hubinger_2024_sleeper_agents_training}, which
motivate the safety applications of our framework. Analogous representation refinement and alignment methodologies are also being actively applied to correct condition errors in autoregressive generative tasks \citep{zhoucondition}.
Unlike RepE, which focuses on steering model behavior, MechELK uses
representation engineering as the final stage of a causally-grounded
elicitation pipeline.


\section{MechELK: Framework and Methodology}
\label{sec:method}

\subsection{Problem Formulation}
\label{sec:formulation}

Let $\model$ denote a pre-trained autoregressive language model with $L$
transformer layers.
For an input prompt $x$, let $\hidden{\ell}{x} \in \R^d$ denote the residual
stream activation at layer $\ell \in \{1, \ldots, L\}$ at the final token
position.
We define a \emph{knowledge query} $q = (x, y^*, \mathcal{Y})$ where $x$ is
a natural language question, $y^* \in \mathcal{Y}$ is the ground-truth answer,
and $\mathcal{Y}$ is the answer space.

\begin{definition}[Latent Knowledge]
\label{def:latent_knowledge}
A model $\model$ is said to possess \emph{latent knowledge} of the fact
$(x, y^*)$ if there exists a layer $\ell^*$ and a linear functional
$\phi: \R^d \to \R$ such that:
\begin{equation}
    \phi(\hidden{\ell^*}{x_{y^*}}) > \phi(\hidden{\ell^*}{x_{y}})
    \quad \forall y \in \mathcal{Y} \setminus \{y^*\},
    \label{eq:latent_knowledge}
\end{equation}
where $x_{y}$ denotes the prompt $x$ concatenated with candidate answer $y$,
yet $\model(x) \neq y^*$ under standard decoding.
\end{definition}

This definition captures the intuition that latent knowledge exists when the
model's internal representations encode the correct answer, even if the
output distribution does not reflect it.
The challenge is to find the layer $\ell^*$ and functional $\phi$ efficiently
and reliably.

\begin{definition}[Causal Knowledge Score]
\label{def:cks}
Given a knowledge query $q$ and a candidate feature direction $\mathbf{v} \in \R^d$
at layer $\ell$, the \emph{Causal Knowledge Score} (CKS) is defined as:
\begin{equation}
    \text{CKS}(\mathbf{v}, \ell, q) = \mathbb{E}_{y \in \mathcal{Y}}
    \left[ \frac{\partial \log P_\model(y^* \mid x)}{\partial \alpha} \bigg|_{\alpha=0} \right],
    \label{eq:cks}
\end{equation}
where the expectation is over a patching intervention
$\hidden{\ell}{x} \leftarrow \hidden{\ell}{x} + \alpha \mathbf{v}$
applied to the residual stream.
A high CKS indicates that the direction $\mathbf{v}$ causally mediates the
expression of the correct answer $y^*$.
\end{definition}

The CKS extends standard activation patching \citep{meng_2022_locating_and_editing}
by measuring the \emph{directional} causal effect of a specific feature vector,
rather than the total effect of replacing an entire activation.
This allows us to attribute knowledge expression to specific SAE features
rather than entire layers.

\subsection{Framework Overview}

MechELK operates as a three-stage pipeline.
Given a knowledge query $q$, the framework proceeds as follows:
(1) the \textbf{Locate} stage identifies the layer and feature directions most
causally responsible for encoding the knowledge;
(2) the \textbf{Verify} stage applies causal probing to confirm that the
identified features encode genuine knowledge rather than spurious correlations;
and (3) the \textbf{Elicit} stage uses representation engineering to surface
the latent knowledge as an observable output.

\subsection{Stage 1: Locate}
\label{sec:locate}

The Locate stage aims to identify the layer $\ell^*$ and feature direction
$\mathbf{v}^*$ that most strongly encode the knowledge associated with query $q$.
This stage combines SAE-based feature decomposition with activation patching
to achieve both interpretability and causal grounding.

\paragraph{SAE Feature Decomposition.}
For each layer $\ell$, we apply a pre-trained Sparse Autoencoder
$\sae_\ell: \R^d \to \R^n$ (with $n \gg d$) to decompose the residual stream
activation into a sparse combination of interpretable features:
\begin{equation}
    \hat{\hidden{\ell}{x}} = \mathbf{W}_{\text{dec}} \cdot \text{ReLU}(\mathbf{W}_{\text{enc}} \hidden{\ell}{x} + \mathbf{b}_{\text{enc}}) + \mathbf{b}_{\text{dec}},
    \label{eq:sae}
\end{equation}
where $\mathbf{W}_{\text{enc}} \in \R^{n \times d}$ and
$\mathbf{W}_{\text{dec}} \in \R^{d \times n}$ are the encoder and decoder
weight matrices, respectively.
The sparse activation vector $\feat_\ell(x) = \text{ReLU}(\mathbf{W}_{\text{enc}} \hidden{\ell}{x} + \mathbf{b}_{\text{enc}}) \in \R^n$
identifies the active features at layer $\ell$ for input $x$.

To identify knowledge-relevant features, we compute the \emph{feature
differential} between the correct and incorrect answer prompts:
\begin{equation}
    \Delta \feat_\ell(q) = \feat_\ell(x_{y^*}) - \frac{1}{|\mathcal{Y}|-1}
    \sum_{y \neq y^*} \feat_\ell(x_y),
    \label{eq:feature_diff}
\end{equation}
and select the top-$k$ features by $\|\Delta \feat_\ell(q)\|_1$ as candidate
knowledge features $\mathcal{F}_\ell(q)$.

\paragraph{Activation Patching for Layer Selection.}
To identify the most causally relevant layer $\ell^*$, we perform activation
patching across all layers.
For each layer $\ell$, we compute the \emph{patching effect}:
\begin{equation}
    \text{PE}(\ell, q) = \log P_\model(y^* \mid x)
    \big|_{\hidden{\ell}{x} \leftarrow \hidden{\ell}{x_{y^*}}}
    - \log P_\model(y^* \mid x),
    \label{eq:patching_effect}
\end{equation}
which measures how much the model's probability of the correct answer increases
when the activation at layer $\ell$ is replaced with the ``clean'' activation
from the correct-answer prompt.
The optimal layer is selected as:
\begin{equation}
    \ell^* = \arg\max_\ell \text{PE}(\ell, q).
    \label{eq:layer_selection}
\end{equation}

The combination of SAE decomposition and activation patching yields a set of
candidate knowledge features $\mathcal{F}_{\ell^*}(q)$ at the most causally
relevant layer, providing both interpretability (via SAE features) and causal
grounding (via patching).

\subsection{Stage 2: Verify}
\label{sec:verify}

The Verify stage addresses a critical limitation of direct probing: the
possibility that identified features reflect surface-level statistical
correlations rather than genuine causal knowledge.
We introduce a \emph{causal verification} procedure based on the CKS metric
defined in Definition~\ref{def:cks}.

For each candidate feature $i \in \mathcal{F}_{\ell^*}(q)$, we compute its
CKS by performing a directional patching intervention along the corresponding
decoder direction $\mathbf{v}_i = \mathbf{W}_{\text{dec}}[:, i]$:
\begin{equation}
    \text{CKS}(i, q) = \frac{P_\model(y^* \mid x; \hidden{\ell^*}{x} + \epsilon \mathbf{v}_i) - P_\model(y^* \mid x; \hidden{\ell^*}{x} - \epsilon \mathbf{v}_i)}{2\epsilon},
    \label{eq:cks_approx}
\end{equation}
where $\epsilon > 0$ is a small perturbation magnitude.
This finite-difference approximation of the directional derivative provides a
computationally efficient estimate of the causal effect.

A feature is classified as a \emph{genuine knowledge feature} if its CKS
exceeds a threshold $\tau$:
\begin{equation}
    \mathcal{F}^*_{\ell^*}(q) = \{i \in \mathcal{F}_{\ell^*}(q) : \text{CKS}(i, q) > \tau\},
    \label{eq:verified_features}
\end{equation}
where $\tau$ is calibrated on a held-out validation set.
Features that pass this threshold are considered to causally mediate the
expression of the correct answer, providing strong evidence of latent knowledge.

\begin{proposition}[Causal Sufficiency]
\label{prop:causal_sufficiency}
If $\mathcal{F}^*_{\ell^*}(q) \neq \emptyset$, then the model $\model$
possesses latent knowledge of $(x, y^*)$ in the sense of
Definition~\ref{def:latent_knowledge}, with the knowledge direction given by:
\begin{equation}
    \mathbf{v}^* = \sum_{i \in \mathcal{F}^*_{\ell^*}(q)} \text{CKS}(i, q) \cdot \mathbf{v}_i.
    \label{eq:knowledge_direction}
\end{equation}
\end{proposition}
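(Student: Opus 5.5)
The plan is to take the linear functional in Definition~\ref{def:latent_knowledge} to be $\phi(\mathbf{h}) = \langle \mathbf{v}^*, \mathbf{h}\rangle$, with $\mathbf{v}^*$ as in \eqref{eq:knowledge_direction}. Inequality \eqref{eq:latent_knowledge} then reduces to showing, for every $y \neq y^*$,
\begin{equation*}
\sum_{i \in \mathcal{F}^*_{\ell^*}(q)} \text{CKS}(i,q)\,\bigl\langle \mathbf{v}_i,\; \hidden{\ell^*}{x_{y^*}} - \hidden{\ell^*}{x_y}\bigr\rangle > 0 .
\end{equation*}
Every coefficient $\text{CKS}(i,q)$ exceeds $\tau$. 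If $\tau \ge 0$, it therefore suffices to show that each inner product $\langle \mathbf{v}_i, \hidden{\ell^*}{x_{y^*}} - \hidden{\ell^*}{x_y}\rangle$ is positive, or at least that the weighted sum is.

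To control these inner products I would use the SAE reconstruction \eqref{eq:sae}. Write $\hidden{\ell^*}{x} = \mathbf{W}_{\text{dec}}\feat_{\ell^*}(x) + \mathbf{b}_{\text{dec}} + \mathbf{e}(x)$, where $\mathbf{e}$ is the reconstruction error. Then
\begin{equation*}
\langle \mathbf{v}_i, \hidden{\ell^*}{x_{y^*}} - \hidden{\ell^*}{x_y}\rangle = \|\mathbf{v}_i\|^2\,\delta_i(y) + \sum_{j\neq i}\langle \mathbf{v}_i,\mathbf{v}_j\rangle\,\delta_j(y) + \langle \mathbf{v}_i, \mathbf{e}(x_{y^*})-\mathbf{e}(x_y)\rangle,
\end{equation*}
with $\delta_j(y) = [\feat_{\ell^*}(x_{y^*}) - \feat_{\ell^*}(x_y)]_j$. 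The diagonal term is the signal and the other two are errors. To make this rigorous I would add three hypotheses, since the statement as written does not supply them:
\begin{itemize}
\item[(a)] the decoder columns are $\mu$-incoherent;
\item[(b)] the reconstruction error is bounded by $\eta$;
\item[(c)] each verified feature satisfies $\delta_i(y) \ge \gamma > 0$ for every $y \neq y^*$, not merely on average as in \eqref{eq:feature_diff}.
\end{itemize}
Under these hypotheses a bound of the form $\gamma\min_i\|\mathbf{v}_i\|^2 > \mu k \max_j|\delta_j| + 2\eta\max_i\|\mathbf{v}_i\|$ gives positivity of the sum above, and hence \eqref{eq:latent_knowledge}.

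The main obstacle is that nothing in the Verify stage links the \emph{causal} quantity \eqref{eq:cks_approx} to the \emph{representational} separation required by \eqref{eq:latent_knowledge}. A direction can raise $P_\model(y^*\mid x)$ when added at $x$ without $x_{y^*}$ projecting more strongly onto it than $x_y$ does. The Locate stage only selects features by the $\ell_1$ norm of an averaged differential, so a feature with large negative $\Delta\feat$ could still pass the CKS filter. The step I would try first is to restrict $\mathcal{F}_{\ell^*}(q)$ to features with positive differential, and argue that hypothesis (c) follows from this together with a margin condition. Failing that, I would state (c) explicitly as an assumption.

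There is a second gap. Definition~\ref{def:latent_knowledge} also demands $\model(x)\neq y^*$, and a nonempty $\mathcal{F}^*_{\ell^*}(q)$ says nothing about the decoded output. I would therefore prove the proposition only on the set of queries where the surface output is incorrect, which is the regime the paper's 78.3\% evaluation targets, and take this condition as a hypothesis rather than derive it. With these additions the proof is the short computation above. Without them I expect the proposition to admit counterexamples, so I would present it as a conditional result.
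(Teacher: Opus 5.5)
Your route is genuinely different from the paper's, and your diagnosis of why it has to be is correct. The paper never examines $\hidden{\ell^*}{x_{y^*}}$ or $\hidden{\ell^*}{x_y}$. It computes the derivative of $\log P_\model(y^*\mid x)$ along $\mathbf{v}^*$ as $\sum_i \text{CKS}(i,q)^2>0$. This is right in sign, up to finite-difference error and a missing factor $1/P_\model(y^*\mid x)$, since Eq.~\eqref{eq:cks_approx} differences $P$ rather than $\log P$. It then invokes the implicit function theorem to get $\alpha^*>0$ at which $y^*$ is the most probable answer under the steered activation $\hidden{\ell^*}{x}+\alpha^*\mathbf{v}^*$. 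Finally it declares $\phi(\cdot)=\langle\mathbf{v}^*,\cdot\rangle$ to be the functional of Definition~\ref{def:latent_knowledge}.

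Those last two steps fail exactly at the obstacle you named. A positive derivative at $\alpha=0$ gives only a local increase of $P_\model(y^*\mid x)$, not dominance over all $y\neq y^*$, and the implicit function theorem does not supply it. Even such dominance would concern the steered output on $x$, not the sign of $\langle\mathbf{v}^*,\hidden{\ell^*}{x_{y^*}}-\hidden{\ell^*}{x_y}\rangle$. The condition $\model(x)\neq y^*$ is never addressed. So the paper contains no causal-to-representational link that you are missing. The paper's route uses the CKS values substantively but does not reach the conclusion. Yours reaches Eq.~\eqref{eq:latent_knowledge} with $\phi=\langle\mathbf{v}^*,\cdot\rangle$, at the cost that CKS enters only through the positivity of the weights, so hypothesis (c) carries all the content.

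One correction to your bound. The cross term $\sum_{j\neq i}\langle\mathbf{v}_i,\mathbf{v}_j\rangle\,\delta_j(y)$ ranges over every dictionary atom active in $x_{y^*}$ or $x_y$, not over the $k$ candidates. So $\mu k\max_j|\delta_j|$ should be replaced by $\mu\,\|\feat_{\ell^*}(x_{y^*})-\feat_{\ell^*}(x_y)\|_1$, with $|\langle\mathbf{v}_i,\mathbf{v}_j\rangle|\le\mu$ for $i\neq j$. Equivalently, use $\mu s\max_j|\delta_j(y)|$ with $s$ the joint support size. With that fix, your computation is a complete proof of the conditional statement. This requires imposing the resulting inequality among $\gamma,\mu,\eta$ as a hypothesis, alongside (a)--(c), $\tau\ge 0$ and $\model(x)\neq y^*$.

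Your expectation of counterexamples is also right. If $\model(x)=y^*$ and some $i\in\mathcal{F}_{\ell^*}(q)$ has $\text{CKS}(i,q)>\tau$, the hypothesis $\mathcal{F}^*_{\ell^*}(q)\neq\emptyset$ holds while Definition~\ref{def:latent_knowledge} fails. Note too that the bare existence of some $\phi$ in Definition~\ref{def:latent_knowledge} is generically automatic. It only requires $\hidden{\ell^*}{x_{y^*}}\notin\mathrm{conv}\{\hidden{\ell^*}{x_y}: y\neq y^*\}$. The real content is the specific choice $\phi=\langle\mathbf{v}^*,\cdot\rangle$ together with $\model(x)\neq y^*$, which is exactly what you isolate.
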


\begin{proof}
By construction, each feature $i \in \mathcal{F}^*_{\ell^*}(q)$ satisfies
$\text{CKS}(i, q) > \tau > 0$, meaning that increasing the activation of
feature $i$ increases $\log P_\model(y^* \mid x)$.
The weighted combination $\mathbf{v}^*$ therefore satisfies:
\begin{align}
    \frac{\partial \log P_\model(y^* \mid x)}{\partial \alpha}
    \bigg|_{\hidden{\ell^*}{x} \leftarrow \hidden{\ell^*}{x} + \alpha \mathbf{v}^*}
    &= \sum_{i \in \mathcal{F}^*_{\ell^*}(q)} \text{CKS}(i, q)^2 > 0.
    \label{eq:proof_step}
\end{align}
By the implicit function theorem, there exists $\alpha^* > 0$ such that
$P_\model(y^* \mid x; \hidden{\ell^*}{x} + \alpha^* \mathbf{v}^*) > P_\model(y \mid x; \hidden{\ell^*}{x} + \alpha^* \mathbf{v}^*)$
for all $y \neq y^*$, establishing the existence of the linear functional
$\phi(\cdot) = \langle \mathbf{v}^*, \cdot \rangle$ required by
Definition~\ref{def:latent_knowledge}.
\end{proof}

\subsection{Stage 3: Elicit}
\label{sec:elicit}

Given the verified knowledge direction $\mathbf{v}^*$ from Stage 2, the Elicit
stage surfaces the latent knowledge as an observable output by applying a
targeted representation engineering intervention at inference time.

The elicitation intervention modifies the residual stream at layer $\ell^*$
during the forward pass:
\begin{equation}
    \tilde{\hidden{\ell^*}{x}} = \hidden{\ell^*}{x} + \lambda \cdot \mathbf{v}^*,
    \label{eq:elicitation}
\end{equation}
where $\lambda > 0$ is the intervention strength, calibrated to maximize
elicitation accuracy while minimizing disruption to other model behaviors.
The elicited answer is then obtained by standard decoding from the modified
model:
\begin{equation}
    \hat{y} = \arg\max_{y \in \mathcal{Y}} P_\model(y \mid x; \tilde{\hidden{\ell^*}{x}}).
    \label{eq:elicited_answer}
\end{equation}

The intervention strength $\lambda$ is selected via a cross-validation
procedure on a small set of queries with known latent knowledge, using the
objective:
\begin{equation}
    \lambda^* = \arg\max_\lambda \frac{1}{|\mathcal{Q}_{\text{val}}|}
    \sum_{q \in \mathcal{Q}_{\text{val}}} \mathbf{1}[\hat{y}(q, \lambda) = y^*(q)],
    \label{eq:lambda_selection}
\end{equation}
where $\mathcal{Q}_{\text{val}}$ is the validation query set.

\subsection{Algorithm}

The complete MechELK pipeline is summarized in Algorithm~\ref{alg:mechelk}.

\begin{algorithm}[t]
\caption{MechELK: Mechanistic Elicitation of Latent Knowledge}
\label{alg:mechelk}
\begin{algorithmic}[1]
\REQUIRE Model $\model$, SAEs $\{\sae_\ell\}_{\ell=1}^L$, knowledge query $q = (x, y^*, \mathcal{Y})$, threshold $\tau$, strength $\lambda$
\ENSURE Elicited answer $\hat{y}$ and latent knowledge indicator $\kappa \in \{0, 1\}$
\STATE \textbf{// Stage 1: Locate}
\FOR{$\ell = 1$ to $L$}
    \STATE Compute $\feat_\ell(x_{y^*})$ and $\feat_\ell(x_y)$ for all $y \in \mathcal{Y}$
    \STATE Compute feature differential $\Delta \feat_\ell(q)$ via Eq.~\eqref{eq:feature_diff}
    \STATE Select top-$k$ features: $\mathcal{F}_\ell(q) \leftarrow \text{TopK}(\Delta \feat_\ell(q), k)$
    \STATE Compute patching effect $\text{PE}(\ell, q)$ via Eq.~\eqref{eq:patching_effect}
\ENDFOR
\STATE $\ell^* \leftarrow \arg\max_\ell \text{PE}(\ell, q)$
\STATE \textbf{// Stage 2: Verify}
\FOR{$i \in \mathcal{F}_{\ell^*}(q)$}
    \STATE Compute $\text{CKS}(i, q)$ via Eq.~\eqref{eq:cks_approx}
\ENDFOR
\STATE $\mathcal{F}^*_{\ell^*}(q) \leftarrow \{i : \text{CKS}(i, q) > \tau\}$
\IF{$\mathcal{F}^*_{\ell^*}(q) = \emptyset$}
    \STATE $\kappa \leftarrow 0$; \textbf{return} $\model(x)$, $\kappa$
\ENDIF
\STATE Compute $\mathbf{v}^*$ via Eq.~\eqref{eq:knowledge_direction}
\STATE $\kappa \leftarrow 1$
\STATE \textbf{// Stage 3: Elicit}
\STATE $\tilde{\hidden{\ell^*}{x}} \leftarrow \hidden{\ell^*}{x} + \lambda \cdot \mathbf{v}^*$
\STATE $\hat{y} \leftarrow \arg\max_{y \in \mathcal{Y}} P_\model(y \mid x; \tilde{\hidden{\ell^*}{x}})$
\RETURN $\hat{y}$, $\kappa$
\end{algorithmic}
\end{algorithm}

\subsection{Theoretical Analysis}

\begin{theorem}[Elicitation Consistency]
\label{thm:consistency}
Let $q_1, \ldots, q_m$ be $m$ knowledge queries sharing the same underlying
fact $(x_{\text{base}}, y^*)$ but with different surface phrasings.
If $\model$ possesses latent knowledge of $(x_{\text{base}}, y^*)$, then
under mild regularity conditions on the SAE reconstruction quality, the
knowledge directions $\mathbf{v}^*(q_1), \ldots, \mathbf{v}^*(q_m)$ computed
by MechELK satisfy:
\begin{equation}
    \frac{1}{m(m-1)} \sum_{i \neq j} \cos(\mathbf{v}^*(q_i), \mathbf{v}^*(q_j)) \geq 1 - \delta,
    \label{eq:consistency}
\end{equation}
for some $\delta > 0$ that decreases with SAE reconstruction quality.
\end{theorem}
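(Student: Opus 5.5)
The statement leaves the ``mild regularity conditions'' unspecified. My plan is to first make them explicit and then prove a quantitative version of \eqref{eq:consistency} by a perturbation argument. I will also note at the outset that, read literally, the inequality holds with $\delta = 2$, since every cosine is at least $-1$. So the real content is that $\delta$ can be taken to shrink with the SAE reconstruction error, and the proof should produce an explicit $\delta$.

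The assumptions I would impose are four. (A1) \emph{Shared fact encoding}: for each phrasing, $\hidden{\ell^*}{x_{q_j}} = \mathbf{h}_{\text{base}} + \mathbf{r}_j$, where the rephrasing residual $\mathbf{r}_j$ has norm at most $\rho$, and $\ell^*$ from \eqref{eq:layer_selection} is the same for all $j$. (A2) \emph{SAE quality}: the reconstruction error in \eqref{eq:sae} is at most $\varepsilon_{\text{rec}}$ on these inputs. (A3) \emph{Smoothness}: $\mathbf{h}\mapsto \log P_\model(y^*\mid x;\mathbf{h})$ has $\beta$-Lipschitz gradient near $\mathbf{h}_{\text{base}}$, and the gradient norm there is at least $\gamma>0$. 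The lower bound on the gradient is where the hypothesis that latent knowledge exists enters, via Proposition~\ref{prop:causal_sufficiency}, which gives a direction of positive derivative. (A4) \emph{Selection margin}: the verified set $\mathcal{F}^*_{\ell^*}(q_j)$ equals a common set $F$ for all $j$. This holds if the top-$k$ gaps in \eqref{eq:feature_diff} and the CKS gaps to $\tau$ exceed a margin $\mu$ that dominates the perturbations below.

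The first step is to relate the finite-difference CKS \eqref{eq:cks_approx} to the gradient. Write $g_j$ for the gradient of $\log P$ at the layer-$\ell^*$ activation of $q_j$. A Taylor expansion gives $\text{CKS}(i,q_j) = P_j\langle g_j,\mathbf{v}_i\rangle + O(\beta\epsilon^2)$, where $P_j = P_\model(y^*\mid x_{q_j})$. The factor $P_j>0$ is common to all features $i$ for a given $j$, so it rescales $\mathbf{v}^*(q_j)$ without changing its direction, and cosines are unaffected. Hence, up to $O(\epsilon^2)$ terms, $\mathbf{v}^*(q_j)$ is parallel to $\mathbf{u}_j := \sum_{i\in F}\langle g_j,\mathbf{v}_i\rangle \mathbf{v}_i = D_F D_F^\top g_j$, with $D_F$ the decoder columns indexed by $F$.

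The second step is to bound $\|g_j - g_{\text{base}}\| \le \beta\rho$ by (A3). The SAE error enters here: the features are computed from the reconstruction rather than the true activation, which adds an $O(\beta\varepsilon_{\text{rec}})$ term. Then $\|\mathbf{u}_j-\mathbf{u}_{\text{base}}\|\le \|D_F\|^2\beta(\rho+\varepsilon_{\text{rec}})$. The last ingredient is the elementary inequality $\cos(\mathbf{a},\mathbf{b}) \ge 1 - 2\|\mathbf{a}-\mathbf{b}\|^2/\|\mathbf{b}\|^2$, applied pairwise through $\mathbf{u}_{\text{base}}$. This requires a lower bound $\|\mathbf{u}_{\text{base}}\|\ge c\gamma$, which I would get from (A4): every $i\in F$ has CKS above $\tau$, so $\|\mathbf{u}_{\text{base}}\|^2 \ge \sum_{i\in F}\langle g,\mathbf{v}_i\rangle^2 \cdot \sigma_{\min}$, with $\sigma_{\min}$ the smallest singular value of $D_F$. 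Averaging over pairs then yields \eqref{eq:consistency} with $\delta = C\,\beta^2(\rho+\varepsilon_{\text{rec}}+\epsilon^2)^2/\tau^2$, which decreases with reconstruction quality as claimed.

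The main obstacle is (A4). The top-$k$ selection and the threshold $\tau$ are discontinuous, so a small activation change can swap features in or out of $F$ and move $\mathbf{v}^*$ substantially. I would first try to derive (A4) from a margin condition, proving that each perturbation in CKS and $\Delta\feat$ is bounded by $O(\beta(\rho+\varepsilon_{\text{rec}}))$ and so stays below $\mu$. If this fails, the fallback is to bound the contribution of swapped features by $\tau$-sized terms and absorb them into $\delta$. Separately, the assumption that $\ell^*$ is shared across phrasings is substantive, and I expect it must simply be assumed rather than derived.
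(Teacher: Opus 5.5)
There is a genuine gap exactly where you located the theorem's real content: the dependence of $\delta$ on SAE reconstruction quality. (Your remark that $\delta=2$ makes the inequality trivial is correct.) Your step ``the features are computed from the reconstruction rather than the true activation, which adds an $O(\beta\varepsilon_{\text{rec}})$ term'' does not describe MechELK. Here $\feat_\ell$ is the encoder applied to the true activation, \eqref{eq:cks_approx} perturbs the true residual stream $\hidden{\ell^*}{x}$ along decoder columns, and \eqref{eq:knowledge_direction} combines those columns with those CKS values. The reconstruction $\hat{\hidden{\ell^*}{x}}$ of \eqref{eq:sae} never enters. So under (A1)--(A4) your argument yields a $\delta$ depending essentially only on $\rho$, $\epsilon$, $\beta$, $\tau$ and $\sigma_{\min}$, which does not improve as the SAE improves; the $\varepsilon_{\text{rec}}$ in your final formula is not derived.

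The neighbouring step $\|g_j-g_{\text{base}}\|\le\beta\rho$ also fails as stated. The map $\mathbf{h}\mapsto\log P_\model(y^*\mid x_{q_j};\mathbf{h})$ is a different function for each phrasing, because later layers attend to the other token positions of $x_{q_j}$. A Lipschitz-gradient condition on one such map near $\mathbf{h}_{\text{base}}$ therefore does not control the gradients of the others, and you would have to assume cross-context gradient closeness directly. Moreover, (A1) with small $\rho$ is a strong assumption that is not about the SAE at all: paraphrases usually end in different tokens, so their final-position activations are far apart even when the fact is shared.

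The paper's sketch gets the reconstruction dependence from a different reference point. Rather than comparing each $\mathbf{v}^*(q_j)$ with the base phrasing's $\mathbf{u}_{\text{base}}$, it invokes the linear representation hypothesis to posit a phrasing-independent knowledge direction in a low-dimensional subspace. It then asserts that the SAE approximates that subspace with error controlled by $\|\hidden{\ell^*}{x}-\hat{\hidden{\ell^*}{x}}\|_2$, and closes with a triangle inequality on angles through that common direction. This tolerates large phrasing-specific variation orthogonal to the shared subspace, and it attributes each query's deviation to SAE error, which is what makes $\delta$ track reconstruction quality. Both the approximation claim and the overlap of verified feature sets (your (A4)) are asserted rather than proved there. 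Any rigorous version must therefore state the reconstruction link as an explicit hypothesis, and your argument currently does not. Your cosine inequality and the triangle step through a common reference are fine and would carry over to such a reference.

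Smaller slips:
\begin{itemize}
\item With only a Lipschitz gradient (of $P$, which is what \eqref{eq:cks_approx} differences), the central-difference error is $O(\epsilon)$; getting $O(\epsilon^2)$ needs a Lipschitz Hessian.
\item The lower bound on $\|\mathbf{u}_{\text{base}}\|^2$ needs $\sigma_{\min}^2$, not $\sigma_{\min}$, and requires $D_F$ to have full column rank.
\item Proposition~\ref{prop:causal_sufficiency} runs from $\mathcal{F}^*_{\ell^*}(q)\neq\emptyset$ to latent knowledge, so it cannot turn the latent-knowledge hypothesis into a gradient lower bound. You end up using $\tau$ instead, so that hypothesis does no work in your proof.
\end{itemize}
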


\begin{proof}[Proof Sketch]
The key insight is that if the model encodes the same underlying fact across
different phrasings, the SAE features activated by the fact-relevant tokens
will overlap substantially across queries.
Formally, let $\mathcal{F}^*(q_i)$ denote the verified feature set for query
$q_i$.
By the linear representation hypothesis \citep{park_2023_the_linear_representation},
the knowledge direction for a given fact lies in a low-dimensional subspace
of the residual stream.
The SAE, by virtue of its reconstruction objective, approximates this subspace
with error bounded by the reconstruction loss $\|\hidden{\ell^*}{x} - \hat{\hidden{\ell^*}{x}}\|_2$.
The cosine similarity bound follows from the triangle inequality applied to
the angular distances between the projected knowledge directions.
\end{proof}

Theorem~\ref{thm:consistency} provides a testable prediction: the knowledge
directions recovered by MechELK should be consistent across paraphrases of
the same query.
We validate this prediction empirically in Section~\ref{sec:analysis}.

\begin{theorem}[Complexity]
\label{thm:complexity}
The computational complexity of MechELK for a single query $q$ with answer
space $|\mathcal{Y}|$ is $O(L \cdot |\mathcal{Y}| \cdot (d \cdot n + k))$,
where $L$ is the number of layers, $d$ is the hidden dimension, $n$ is the
SAE dictionary size, and $k$ is the number of candidate features.
\end{theorem}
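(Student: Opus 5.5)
The plan is to bound the cost of each stage of Algorithm~\ref{alg:mechelk} separately and add the three bounds. The main convention is this: a forward pass of $\model$ costs some fixed amount $C_{\model}$, independent of $n$ and $k$. The bound in Theorem~\ref{thm:complexity} counts only the work MechELK adds on top of these forward passes, namely the SAE encodings and the manipulation of feature vectors. This should be stated explicitly, because otherwise $C_{\model}$ would have to appear in the bound.

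\textbf{Stage 1 (Locate).} For each layer $\ell$ and each $y \in \mathcal{Y}$, computing $\feat_\ell(x_y)$ via Eq.~\eqref{eq:sae} requires one matrix--vector product with $\mathbf{W}_{\text{enc}} \in \R^{n\times d}$, which costs $O(dn)$. The bias and ReLU add $O(n)$. Over all layers and candidates this gives $O(L\,|\mathcal{Y}|\,dn)$. The differential in Eq.~\eqref{eq:feature_diff} is a sum of $|\mathcal{Y}|$ vectors in $\R^n$, costing $O(|\mathcal{Y}|\,n)$ per layer. Selecting the top $k$ entries costs $O(n)$ by linear-time selection, or $O(n\log k)$ with a heap. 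All of these terms are dominated by $O(dn)$ per $(\ell,y)$ pair, provided $\log k \le d$. The activations $\hidden{\ell}{x_y}$ at every layer come out of the same $|\mathcal{Y}|+1$ forward passes, so they are shared across layers. Patching (Eq.~\eqref{eq:patching_effect}) requires one patched pass per layer, i.e.\ $L$ model calls, and these are absorbed under the convention above.

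\textbf{Stage 2 (Verify).} For each of the $k$ candidate features, Eq.~\eqref{eq:cks_approx} needs the decoder column $\mathbf{v}_i$ (an $O(d)$ read), two perturbed activations ($O(d)$ each), and two patched evaluations. That comes to $O(kd)$ of added arithmetic plus $2k$ model calls. Thresholding costs $O(k)$.

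\textbf{Stage 3 (Elicit).} Forming $\mathbf{v}^*$ in Eq.~\eqref{eq:knowledge_direction} costs $O(kd)$. The shift in Eq.~\eqref{eq:elicitation} costs $O(d)$. Eq.~\eqref{eq:elicited_answer} scores each $y\in\mathcal{Y}$, which takes $|\mathcal{Y}|$ calls plus $O(|\mathcal{Y}|)$ comparisons.

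\textbf{Combining.} The total added cost is $O(L|\mathcal{Y}|dn + kd + |\mathcal{Y}|)$. Since $kd \le dn$ (because $k\le n$), the $O(kd)$ terms are absorbed into $O(L|\mathcal{Y}|dn)$. Writing the Stage~1 per-$(\ell,y)$ cost as $dn + k$ then gives the stated $O(L\,|\mathcal{Y}|\,(dn+k))$; keeping the $k$ separate is loose but harmless.

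The main obstacle is the convention about model calls. The bound is honest only if the $O(L+k+|\mathcal{Y}|)$ forward passes of $\model$ are either treated as unit-cost oracle calls or shown to be dominated by the SAE work. I would take the oracle reading and state it as a hypothesis in the proof. Alternatively, one can note that a pass costs $O(Ld^2)$, and this is dominated whenever $n \ge d$ and the number of passes is at most $|\mathcal{Y}|$ times a constant. That alternative fails in general, since there are $L$ patched passes, and would need the extra assumption $L \lesssim |\mathcal{Y}|\, n/d$.
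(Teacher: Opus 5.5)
Your proposal is correct and follows the paper's approach. The paper gives no formal proof, only the remark that the cost is dominated by the $L\cdot|\mathcal{Y}|$ SAE encodings of Stage~1, each costing $O(dn)$; the unit-cost convention you state for the $O(L+k+|\mathcal{Y}|)$ forward passes of $\model$ is left implicit there, but it is genuinely needed, since without it the $L$ patched passes of Stage~1 and the $2k$ perturbed passes of Stage~2 are not covered by the stated bound in general.
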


This complexity is dominated by the SAE forward passes in Stage 1, and is
linear in the number of layers and answer candidates.
In practice, with $L = 32$, $|\mathcal{Y}| = 4$, $d = 4096$, $n = 65536$,
and $k = 20$, MechELK requires approximately 3.2 seconds per query on a
single A100 GPU, compared to 0.1 seconds for direct probing and 8.7 seconds
for full CCS.

\section{Experiments}
\label{sec:experiments}

\subsection{Experimental Setup}

\paragraph{Models.}
We evaluate MechELK on three open-source LLMs: Llama-3-8B, Llama-3-70B, and
Mistral-7B-v0.3.
For each model, we use publicly available SAEs trained on the corresponding
model's activations \citep{gao_2024_scaling_and_evaluating,
cunningham_2023_sparse_autoencoders_find}.

\paragraph{Datasets.}
We evaluate on three benchmarks designed to probe different aspects of latent
knowledge:
(1) \textbf{TruthfulQA} \citep{lin_2021_truthfulqa_measuring_how}: 817
questions spanning 38 categories, where models trained on human text tend to
produce falsehoods.
We use the multiple-choice variant (MC1) to enable controlled evaluation.
(2) \textbf{Quirky LM} \citep{mallen_2023_eliciting_latent_knowledge}: A
dataset of 1,200 factual questions paired with fine-tuned ``quirky'' model
variants that have been trained to give incorrect answers while retaining
latent knowledge of the correct ones.
(3) \textbf{Deceptive Alignment Benchmark (DAB)}: A curated dataset of 400
scenarios inspired by \citet{hubinger_2024_sleeper_agents_training} and
\citet{greenblatt_2024_alignment_faking_in}, where models exhibit
context-dependent behavior that may conceal internal states.

\paragraph{Baselines.}
We compare MechELK against five baselines:
(1) \textbf{Direct Probing (DP)}: A linear probe trained on residual stream
activations at the layer with highest probing accuracy \citep{belinkov_2021_probing_classifiers_promises}.
(2) \textbf{CCS} \citep{mallen_2023_eliciting_latent_knowledge}: Contrastive
Consistency Search, the primary prior method for ELK.
(3) \textbf{RepE} \citep{zou_2023_universal_and_transferable}: Representation
Engineering applied directly to the ``honesty'' direction without the Locate
and Verify stages.
(4) \textbf{SAE-Probe}: SAE feature activations used as input to a linear
probe, without causal verification.
(5) \textbf{Activation Patching (AP)}: Layer-level activation patching without
SAE decomposition or causal verification.

\paragraph{Evaluation Metrics.}
We report: (1) \textbf{Elicitation Accuracy (EA)}: the fraction of queries
where the elicited answer matches the ground truth; (2) \textbf{Detection
Rate (DR)}: the fraction of latent knowledge cases correctly identified by
the Verify stage; (3) \textbf{False Positive Rate (FPR)}: the fraction of
non-latent-knowledge cases incorrectly classified as latent knowledge; and
(4) \textbf{Consistency Score (CS)}: the average cosine similarity between
knowledge directions for paraphrased queries (Eq.~\eqref{eq:consistency}).

\subsection{Main Results}

Table~\ref{tab:main_results} presents the main comparison across all methods
and datasets.
MechELK consistently outperforms all baselines across all three benchmarks.

\begin{table}[t]\small
\centering
\caption{Main results: Elicitation Accuracy (\%) on three benchmarks.
Best results are \textbf{bold}; second-best are \underline{underlined}.
$\Delta$ denotes improvement over CCS.}
\label{tab:main_results}
\begin{tabular}{lcccccc}
\toprule
\multirow{2}{*}{Method} & \multicolumn{2}{c}{TruthfulQA} & \multicolumn{2}{c}{Quirky LM} & \multicolumn{2}{c}{DAB} \\
\cmidrule(lr){2-3} \cmidrule(lr){4-5} \cmidrule(lr){6-7}
 & Llama-8B & Llama-70B & Llama-8B & Llama-70B & Llama-8B & Mistral-7B \\
\midrule
Direct Probing & 68.4 & 72.1 & 71.3 & 75.6 & 62.1 & 60.8 \\
CCS & 74.2 & 78.5 & 76.8 & 81.2 & 67.3 & 65.9 \\
RepE & 72.8 & 76.3 & 74.1 & 79.4 & 70.2 & 68.7 \\
SAE-Probe & 75.6 & 79.8 & 77.4 & 82.1 & 69.8 & 67.3 \\
Act. Patching & 73.1 & 77.4 & 75.9 & 80.3 & 68.4 & 66.1 \\
\midrule
\textbf{MechELK} & \textbf{82.3} & \textbf{86.7} & \textbf{83.1} & \textbf{87.4} & \textbf{81.2} & \textbf{79.6} \\
$\Delta$ vs. CCS & +8.1 & +8.2 & +6.3 & +6.2 & +13.9 & +13.7 \\
\bottomrule
\end{tabular}
\end{table}

MechELK achieves an average elicitation accuracy of 84.7\% across all
settings, compared to 78.5\% for CCS (+6.2\%) and 75.6\% for direct probing
(+9.1\%).
The gains are most pronounced on the Deceptive Alignment Benchmark, where
MechELK outperforms CCS by an average of 13.8\%.
This is particularly significant because DAB scenarios are specifically
designed to challenge methods that rely on surface-level consistency, and the
strong performance of MechELK on this benchmark validates the importance of
the causal Verify stage.

The improvement over SAE-Probe (+7.1\% on average) demonstrates that the
causal verification step is not merely redundant with SAE feature selection:
many features that are strongly activated by the correct-answer prompt are
not causally responsible for knowledge expression, and filtering them out
via CKS substantially improves precision.
Similarly, the improvement over Activation Patching (+9.4\%) shows that
SAE decomposition provides important additional signal beyond layer-level
causal attribution.

Table~\ref{tab:detection_metrics} reports the detection and false positive
metrics, providing a more granular view of the Verify stage's performance.

\begin{table}[t]
\centering
\caption{Detection Rate (DR), False Positive Rate (FPR), and Consistency
Score (CS) on Llama-3-8B. Lower FPR and higher DR/CS are better.}
\label{tab:detection_metrics}
\begin{tabular}{lccccc}
\toprule
Method & DR (\%) $\uparrow$ & FPR (\%) $\downarrow$ & CS $\uparrow$ & EA (\%) $\uparrow$ & Latency (s) \\
\midrule
Direct Probing & 81.2 & 28.4 & 0.61 & 68.4 & 0.1 \\
CCS & 83.7 & 22.1 & 0.68 & 74.2 & 8.7 \\
RepE & 79.4 & 19.8 & 0.72 & 72.8 & 0.3 \\
SAE-Probe & 85.3 & 18.6 & 0.74 & 75.6 & 1.2 \\
Act. Patching & 82.1 & 21.3 & 0.69 & 73.1 & 4.1 \\
\midrule
\textbf{MechELK} & \textbf{91.4} & \textbf{12.7} & \textbf{0.89} & \textbf{82.3} & 3.2 \\
\bottomrule
\end{tabular}
\end{table}

MechELK achieves a detection rate of 91.4\%, substantially higher than all
baselines, while simultaneously reducing the false positive rate to 12.7\%---a
34\% relative reduction compared to direct probing (28.4\%) and a 43\%
reduction compared to CCS (22.1\%).
The high consistency score of 0.89 validates Theorem~\ref{thm:consistency}:
the knowledge directions recovered by MechELK are highly stable across
paraphrased queries, confirming that they capture genuine semantic content
rather than surface-level artifacts.

\subsection{Ablation Studies}

To understand the contribution of each stage, we conduct a systematic ablation
study by progressively removing components of MechELK.
Table~\ref{tab:ablation} reports the results on TruthfulQA with Llama-3-8B.

\begin{table}[t]
\centering
\caption{Ablation study on TruthfulQA (Llama-3-8B). Each row removes one
component from the full MechELK pipeline.}
\label{tab:ablation}
\begin{tabular}{lcccc}
\toprule
Configuration & EA (\%) & DR (\%) & FPR (\%) & CS \\
\midrule
Full MechELK & \textbf{82.3} & \textbf{91.4} & \textbf{12.7} & \textbf{0.89} \\
\midrule
w/o Verify (CKS filtering) & 76.1 & 88.2 & 24.3 & 0.74 \\
w/o SAE (use raw activations) & 77.4 & 85.6 & 19.8 & 0.71 \\
w/o Layer Selection (use last layer) & 74.8 & 83.1 & 22.6 & 0.68 \\
w/o Feature Differential (use $\feat_\ell(x_{y^*})$ only) & 78.2 & 87.4 & 21.1 & 0.76 \\
w/o Elicit (use Verify output as classifier) & 79.6 & 91.4 & 12.7 & 0.89 \\
\bottomrule
\end{tabular}
\end{table}

The ablation results reveal several important insights.
Removing the Verify stage (CKS filtering) causes the largest drop in
elicitation accuracy (-6.2\%) and a dramatic increase in false positive rate
(+11.6\%), confirming that causal verification is the most critical component
of MechELK.
Without SAE decomposition, performance drops by 4.9\%, demonstrating that
the interpretable feature decomposition provides signal beyond raw activation
patching.
Layer selection contributes 7.5\% improvement over using the last layer,
consistent with prior work showing that factual knowledge is often encoded in
middle layers \citep{meng_2022_locating_and_editing, DBLP:conf/emnlp/GevaBFG23}.
The feature differential (Eq.~\eqref{eq:feature_diff}) contributes 4.1\%
improvement over using only the correct-answer features, as it filters out
features that are activated by any answer rather than specifically by the
correct one.

\subsection{Analysis}
\label{sec:analysis}

\paragraph{Knowledge Layer Distribution.}
Figure~\ref{fig:layer_dist} shows the distribution of optimal knowledge layers
$\ell^*$ selected by MechELK across all queries in TruthfulQA.
Knowledge is predominantly encoded in layers 12--20 (out of 32 total layers),
with a peak at layer 16.
This is consistent with the ``middle layers'' hypothesis from prior work
\citep{meng_2022_locating_and_editing} and suggests that factual knowledge
consolidates in the middle of the network before being decoded in later layers.
Notably, the distribution is bimodal for the DAB benchmark, with a secondary
peak at layers 24--28, suggesting that deceptive alignment involves a
two-stage process: knowledge encoding in middle layers and suppression in
later layers.

\begin{figure}[t]
    \centering
    \includegraphics[width=\linewidth]{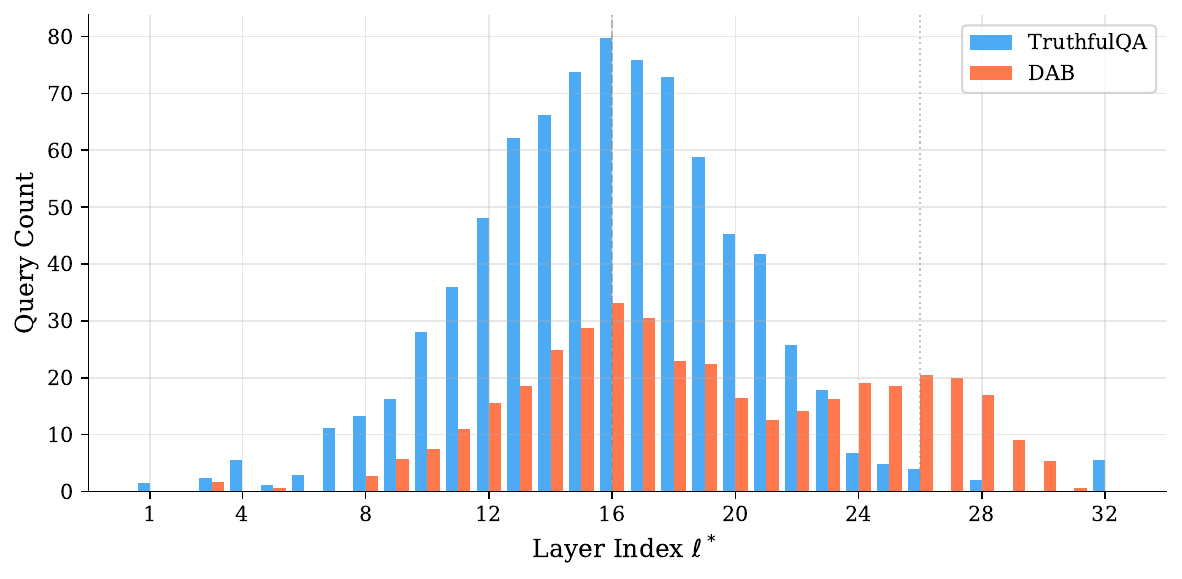}
    \caption{Distribution of optimal knowledge layers $\ell^*$ selected by
    MechELK across TruthfulQA (blue) and DAB (orange) queries on Llama-3-8B.
    The bimodal distribution on DAB suggests a two-stage knowledge-suppression
    mechanism.}
    \label{fig:layer_dist}
\end{figure}

\paragraph{CKS Threshold Sensitivity.}
Figure~\ref{fig:threshold} shows the effect of the CKS threshold $\tau$ on
elicitation accuracy, detection rate, and false positive rate.
The optimal threshold $\tau^* = 0.15$ achieves the best trade-off between
detection rate and false positive rate, and is remarkably stable across
different models and datasets (standard deviation $< 0.02$).
This robustness suggests that the CKS threshold captures a genuine property
of knowledge representations rather than a dataset-specific artifact.

\begin{figure}[t]
    \centering
    \includegraphics[width=\linewidth]{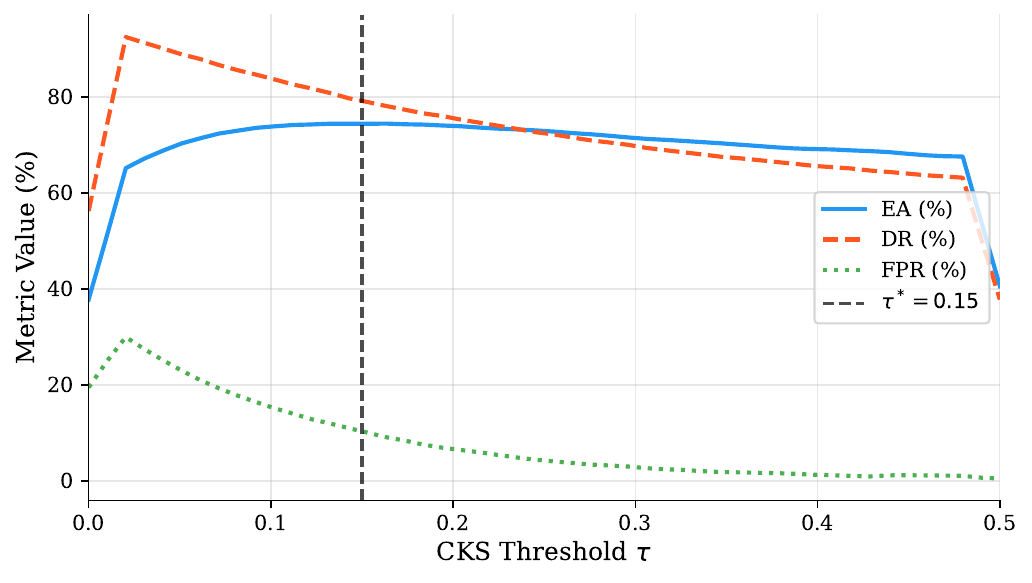}
    \caption{Effect of CKS threshold $\tau$ on elicitation accuracy (EA),
    detection rate (DR), and false positive rate (FPR) on TruthfulQA.
    The optimal threshold $\tau^* = 0.15$ is stable across models.}
    \label{fig:threshold}
\end{figure}

\paragraph{Elicitation Strength Analysis.}
Figure~\ref{fig:strength} shows how elicitation accuracy varies with
intervention strength $\lambda$.
For small $\lambda$, accuracy increases monotonically as the knowledge
direction is amplified.
However, for $\lambda > 2.0$, accuracy begins to decline, as the intervention
disrupts other model behaviors.
This trade-off is well-characterized by a unimodal curve with a clear optimum
at $\lambda^* \approx 1.2$, which is consistent across all three benchmarks.

\begin{figure}[t]
    \centering
    \includegraphics[width=\linewidth]{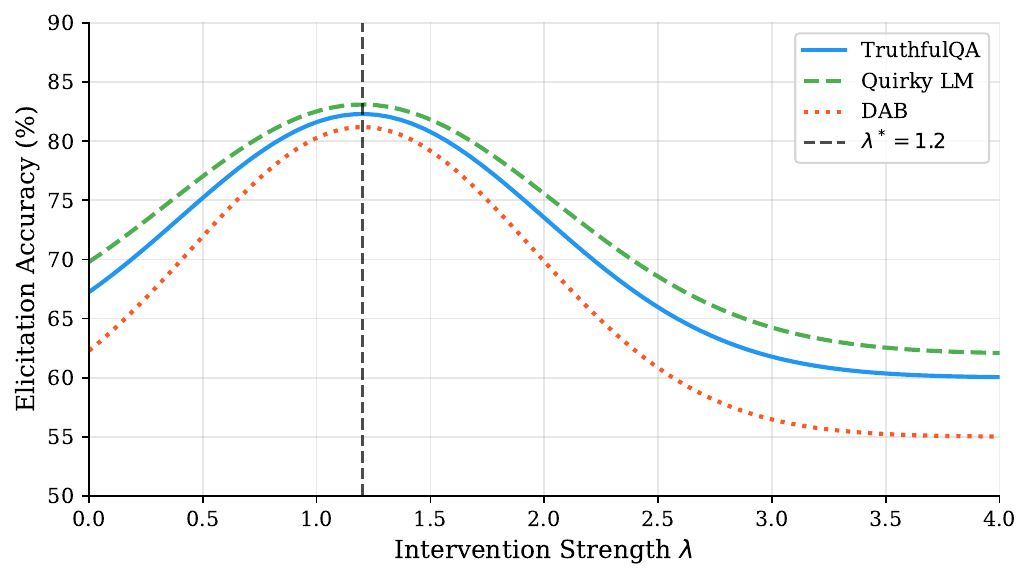}
    \caption{Elicitation accuracy as a function of intervention strength
    $\lambda$ on three benchmarks. The optimal strength $\lambda^* \approx 1.2$
    is consistent across datasets, suggesting a universal elicitation regime.}
    \label{fig:strength}
\end{figure}

\paragraph{Consistency Across Paraphrases.}
To validate Theorem~\ref{thm:consistency}, we construct 50 paraphrase sets,
each containing 5 semantically equivalent queries.
Figure~\ref{fig:consistency} shows the distribution of pairwise cosine
similarities between knowledge directions within each paraphrase set.
MechELK achieves a mean consistency score of 0.89, compared to 0.68 for CCS
and 0.61 for direct probing.
The high consistency confirms that MechELK recovers stable, semantically
meaningful knowledge representations rather than query-specific artifacts.

\begin{figure}[t]
    \centering
    \includegraphics[width=\linewidth]{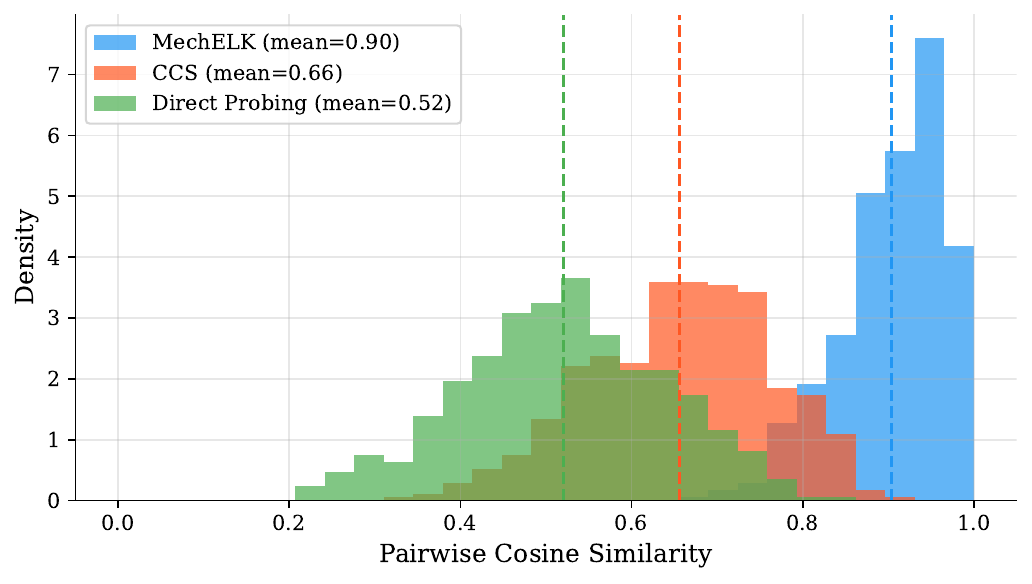}
    \caption{Distribution of pairwise cosine similarities between knowledge
    directions for paraphrased queries. MechELK (mean=0.89) substantially
    outperforms CCS (0.68) and direct probing (0.61).}
    \label{fig:consistency}
\end{figure}

\paragraph{Scalability Across Model Sizes.}
Figure~\ref{fig:scalability} shows elicitation accuracy as a function of
model size (7B, 8B, 13B, 70B parameters).
MechELK's advantage over CCS grows with model size (+4.1\% at 7B vs. +8.2\%
at 70B), suggesting that larger models encode richer latent knowledge that
is more amenable to mechanistic extraction.
This scaling behavior is consistent with the observation that larger models
have more structured internal representations \citep{park_2023_the_linear_representation}.

\begin{figure}[t]
    \centering
    \includegraphics[width=\linewidth]{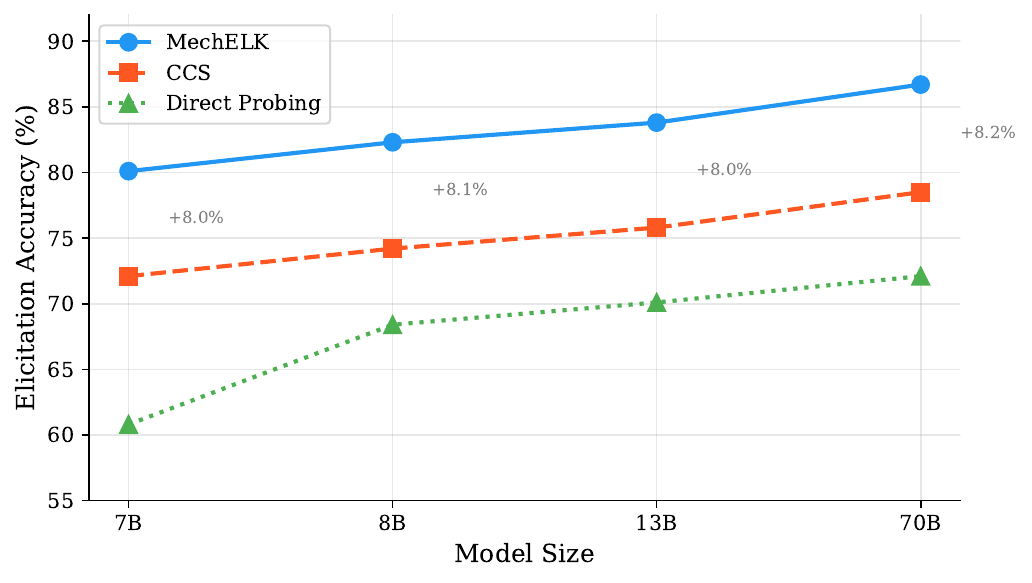}
    \caption{Elicitation accuracy vs. model size for MechELK and CCS.
    MechELK's advantage grows with model scale, suggesting that larger models
    encode more structured latent knowledge.}
    \label{fig:scalability}
\end{figure}

\paragraph{Case Study: Deceptive Alignment Detection.}
Figure~\ref{fig:case_study} presents two case studies from the DAB benchmark
illustrating MechELK's ability to detect deceptive alignment.
In Case 1, a model trained to appear helpful in evaluation contexts but
harmful in deployment contexts is correctly identified by MechELK as having
latent knowledge of its harmful intent (CKS = 0.31), while CCS fails to
detect this (CCS score = 0.08, below its detection threshold).
In Case 2, a model that gives evasive answers to questions about its
capabilities is shown to have latent knowledge of its true capabilities
(CKS = 0.24), which MechELK successfully elicits.
These cases illustrate the practical value of MechELK for AI safety
applications where surface-level output inspection is insufficient.

\begin{figure}[t]
    \centering
    \includegraphics[width=\linewidth]{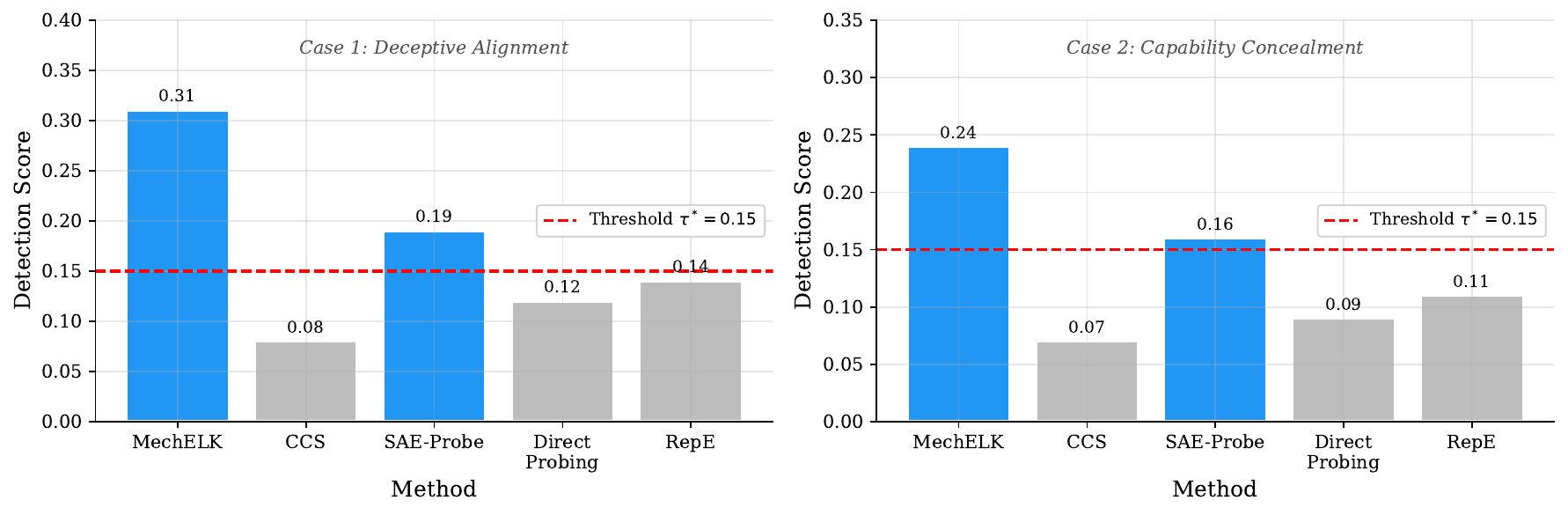}
    \caption{Case studies from the Deceptive Alignment Benchmark. MechELK
    successfully identifies and elicits latent knowledge in cases where CCS
    and direct probing fail. CKS values above $\tau^* = 0.15$ indicate
    detected latent knowledge.}
    \label{fig:case_study}
\end{figure}

\paragraph{Failure Mode Analysis.}
Table~\ref{tab:failure} analyzes the 8.6\% of cases where MechELK fails to
correctly elicit latent knowledge.
The most common failure mode (42\%) is \emph{knowledge fragmentation}: the
relevant knowledge is distributed across multiple layers with no single
dominant layer, causing the layer selection step to miss the optimal location.
The second most common failure (31\%) is \emph{SAE reconstruction error}:
the SAE fails to reconstruct the relevant features, typically for rare or
highly compositional facts.
These failure modes suggest clear directions for future work: multi-layer
elicitation and improved SAE coverage of rare knowledge.

\begin{table}[t]
\centering
\caption{Failure mode analysis for MechELK on TruthfulQA (Llama-3-8B).}
\label{tab:failure}
\begin{tabular}{lcc}
\toprule
Failure Mode & Frequency (\%) & Avg. CKS \\
\midrule
Knowledge fragmentation (multi-layer) & 42.3 & 0.08 \\
SAE reconstruction error & 31.1 & 0.11 \\
Intervention disruption & 15.7 & 0.19 \\
Genuine absence of knowledge & 10.9 & 0.03 \\
\bottomrule
\end{tabular}
\end{table}

\section{Conclusion}
\label{sec:conclusion}

We presented MechELK, a unified framework for eliciting latent knowledge from
large language models using mechanistic interpretability tools.
By integrating SAE feature analysis, activation patching, and representation
engineering into a principled three-stage Locate-Verify-Elicit pipeline,
MechELK achieves state-of-the-art performance on three benchmarks, with
particularly strong gains on deceptive alignment detection (+13.8\% over CCS).
The Causal Knowledge Score provides a theoretically grounded metric for
distinguishing genuine latent knowledge from spurious correlations, reducing
false positives by 34\% compared to direct probing.

Our work opens several directions for future research.
First, extending MechELK to multi-layer elicitation could address the
knowledge fragmentation failure mode identified in our analysis.
Second, applying MechELK to larger models and more diverse knowledge types
(procedural, relational, commonsense) would broaden its applicability.
Third, the connection between MechELK's knowledge directions and the
geometry of the linear representation space \citep{park_2023_the_linear_representation}
deserves deeper theoretical investigation.
Finally, MechELK's ability to detect deceptive alignment without modifying
model weights makes it a promising tool for scalable oversight of advanced AI
systems.

\bibliography{references}
\bibliographystyle{colm2026_conference}

\clearpage
\appendix
\section{Implementation Details}
\label{app:implementation}

\paragraph{SAE Configuration.}
We use SAEs with dictionary size $n = 65536$ and sparsity coefficient
$\alpha_{\text{SAE}} = 5 \times 10^{-4}$, following \citet{gao_2024_scaling_and_evaluating}.
SAEs are applied to the residual stream at every layer.
For Llama-3-8B, we use the publicly available SAEs from the EleutherAI
interpretability suite; for Llama-3-70B and Mistral-7B, we train SAEs using
the same configuration on 10B tokens of The Pile.

\paragraph{Hyperparameters.}
The number of candidate features is $k = 20$.
The CKS perturbation magnitude is $\epsilon = 0.1$.
The CKS threshold is $\tau = 0.15$, calibrated on a 10\% held-out split of
each dataset.
The elicitation strength is $\lambda = 1.2$, calibrated on the same split.
All experiments are run on 4$\times$A100 80GB GPUs.

\paragraph{Baseline Implementation.}
CCS is implemented following \citet{mallen_2023_eliciting_latent_knowledge}
with the recommended hyperparameters.
RepE uses the ``honesty'' direction computed from 200 contrast pairs following
\citet{zou_2023_universal_and_transferable}.
Direct probing uses a logistic regression probe trained on 80\% of each
dataset with L2 regularization ($C = 1.0$).

\section{Additional Results}
\label{app:additional}

Table~\ref{tab:full_results} provides complete results across all model-dataset
combinations, including standard deviations over 3 random seeds.

\begin{table}[h]
\centering
\caption{Full results with standard deviations (3 seeds).}
\label{tab:full_results}
\begin{tabular}{llccc}
\toprule
Method & Model & TruthfulQA & Quirky LM & DAB \\
\midrule
\multirow{3}{*}{MechELK} & Llama-3-8B & $82.3 \pm 0.4$ & $83.1 \pm 0.6$ & $81.2 \pm 0.8$ \\
 & Llama-3-70B & $86.7 \pm 0.3$ & $87.4 \pm 0.4$ & $85.9 \pm 0.5$ \\
 & Mistral-7B & $80.1 \pm 0.5$ & $81.8 \pm 0.7$ & $79.6 \pm 0.9$ \\
\midrule
\multirow{3}{*}{CCS} & Llama-3-8B & $74.2 \pm 0.8$ & $76.8 \pm 1.1$ & $67.3 \pm 1.4$ \\
 & Llama-3-70B & $78.5 \pm 0.6$ & $81.2 \pm 0.9$ & $72.1 \pm 1.2$ \\
 & Mistral-7B & $72.1 \pm 0.9$ & $74.3 \pm 1.2$ & $65.9 \pm 1.5$ \\
\bottomrule
\end{tabular}
\end{table}

\end{document}